\documentclass[10pt,twocolumn,letterpaper]{article}

\def\printOption{final} 

\usepackage{cvpr}
\usepackage{times}


\usepackage{import}

\usepackage[utf8]{inputenc} 
\usepackage[T1]{fontenc}    

\usepackage{amsmath}
\usepackage{amsthm}
\usepackage{mathtools}
\usepackage{commath}
\usepackage{mathrsfs}
\usepackage{amssymb}
\usepackage{amsfonts}       
\usepackage{bbm}
\usepackage{bm}
\usepackage{fancyhdr}
\usepackage{cancel}
\usepackage{nicefrac}       
\usepackage{microtype}      

\usepackage{algorithm}
\usepackage[noend]{algpseudocode}
\usepackage{algorithmicx}
\usepackage{eqparbox}

\usepackage[pdftex]{graphicx}
\usepackage{epsfig}
\usepackage[utf8]{inputenc}
\usepackage{tabularx}
\usepackage{booktabs, multicol, multirow}	
\usepackage[table, xcdraw]{xcolor}


\usepackage{caption}
\usepackage{subcaption}

\usepackage{appendix}

\usepackage{cite}
\usepackage{url}            


\def\printFinal{final}

\ifx \printOption \printFinal
\usepackage[breaklinks=true,bookmarks=false]{hyperref}
\cvprfinalcopy
\setcounter{page}{1}
\else
\usepackage[pagebackref=true,breaklinks=true,letterpaper=true,colorlinks,bookmarks=false]{hyperref}
\ifcvprfinal\pagestyle{empty}\fi
\fi




\DeclareMathOperator*{\argmin}{arg\,min}

\newcommand{\R}{\ensuremath{\mathbb{R}}}

\newtheorem{proposition}{Proposition}[section]

\theoremstyle{definition}

\numberwithin{equation}{section}

\makeatletter
\def\BState{\State\hskip-\ALG@thistlm}
\makeatother


\title{Deep Metric Learning with Alternating Projections onto Feasible Sets}

\author{%
  O\u{g}ul Can$^{\star}$ \qquad Yeti Z. G\"{u}rb\"{u}z$^{\dagger}$ \qquad A. Ayd{\i}n Alatan$^{\ddagger}$ \\
  Center for Image Analysis (OGAM), Middle East Technical University, Turkey \\
  {\tt\small $^{\star}$ogul.can@metu.edu.tr, $^{\dagger}$ygurbuz@metu.edu.tr, $^{\ddagger}$alatan@metu.edu.tr }
}

%




\begin{document}

\maketitle

\begin{abstract}
During the training of networks for distance metric learning, minimizers of the typical loss functions can be considered as "feasible points" satisfying a set of constraints imposed by the training data. To this end, we reformulate distance metric learning problem as finding a feasible point of a constraint set where the embedding vectors of the training data satisfy desired intra-class and inter-class proximity. The feasible set induced by the constraint set is expressed as the intersection of the relaxed feasible sets which enforce the proximity constraints only for particular samples (a sample from each class) of the training data. Then, the feasible point problem is to be approximately solved by performing alternating projections onto those feasible sets. Such an approach introduces a regularization term and results in minimizing a typical loss function with a systematic batch set construction where these batches are constrained to contain the same sample from each class for a certain number of iterations. Moreover, these particular samples can be considered as the class representatives, allowing efficient utilization of hard class mining during batch construction. The proposed technique is applied with the well-accepted losses and evaluated on Stanford Online Products, CAR196 and CUB200-2011 datasets for image retrieval and clustering. Outperforming state-of-the-art, the proposed approach consistently improves the performance of the integrated loss functions with no additional computational cost and boosts the performance further by hard negative class mining.

\end{abstract}


\section{Inroduction}

Distance metric learning (DML) is the problem of finding a proper function that satisfies metric axioms and assesses the semantic dissimilarity of the data samples from its domain. This task is generally realized by learning proper representations for the data samples so that the semantically similar ones are embedded to the small vicinity in the representation space as the dissimilar samples are placed relatively apart in the Euclidean sense. The representations are learned through an optimization framework in which the objective function utilizes the loss terms to impose the desired intra-class and inter-class proximity constraints in the representation space \cite{hu2014discriminative, schroff2015facenet, wu2017sampling, song2017deep, sohn2016improved, hadsell2006dimensionality, oh2016deep, Wang_2019_CVPR_MS, Qian_2019_ICCV}. The optimization is performed with mini-batch gradient updates and the procedure is generally guided by providing deliberately selected exemplars \cite{schroff2015facenet,yuan2017hard,sohn2016improved, wu2017sampling,harwood2017smart,ge2018deep, Suh_2019_CVPR,Wang_2019_CVPR, Wang_2019_CVPR_MS}.

\begin{figure}[t]

\begin{minipage}[b]{1.0\linewidth}
  \centering
  \centerline{\includegraphics[width=\linewidth]{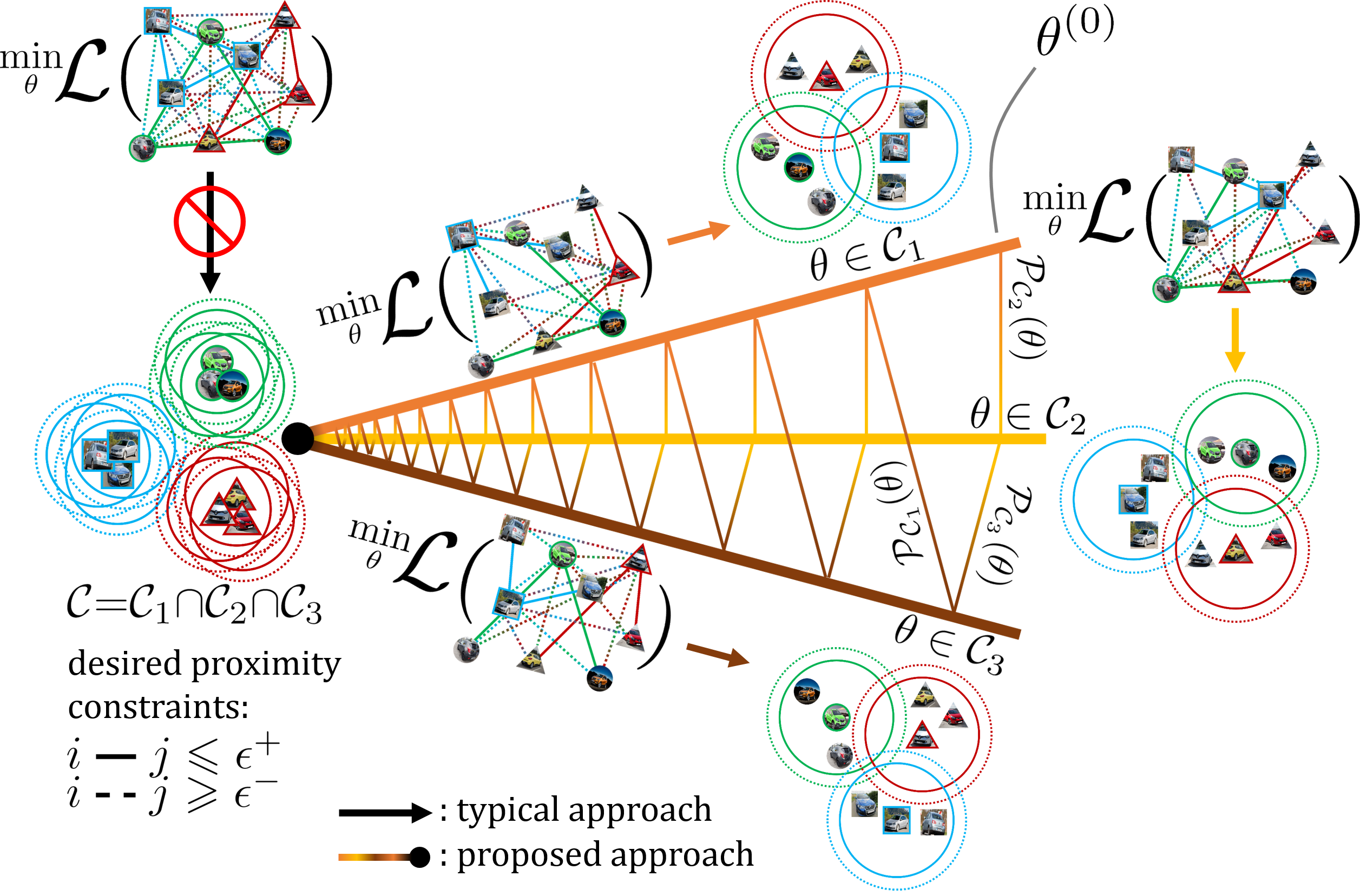}}

\end{minipage}

\caption{Proposed approach to DML problem. Instead of directly minimizing a loss function composed of the penalty terms enforcing the all proximity constraints (left), we alternatively minimize loss functions of proximity constraints only for particular samples (outlined samples). Each of the three lines (orange, yellow, brown) depicts a subset where the proximity constraints are satisfied for outlined samples as in Eq. \eqref{eq:supersets}. The solutions are related by projection to obtain a solution at the intersection. (Best viewed once magnified).}
	\label{fig:the_figure_1}
\end{figure}

Existing approaches focus on inventing loss functions to enforce all the proximity constraints at once. However, mini-batch gradient update nature of the optimization procedure implies alternatively considering only the subsets of the proximity constraints and eventually, the obtained representations are fail to satisfy the desired proximity constraints holistically due to possible traps in local minima.

To alleviate this problem, we revisit the proximity constraints in the representation space implied by the loss terms for proper DML. To develop a general framework for DML, we focus on finding a feasible point satisfying the proximity constraints. Such an alternative track is novel in the development of the DML frameworks and addresses the challenge of satisfying the proximity constraints for all data pairs to better match the distance in the representation space with semantic dissimilarity.

In contrast to existing methods, we approach the problem by posing it as a set intersection problem and propose to solve it by performing alternating projections onto the relaxed sets defined by the subsets of the proximity constraints. Our formulation results in relatively easier subproblems to be solved by minimizing the regularized version of the typical loss functions for DML with a systematic batch construction, where the batches are constrained to contain a particular sample from each class for a certain number of iterations. Not only that structure allows efficient utilization of hard negative class mining (HNCM) to guide the optimization without offline processing during batch construction but also the subproblems better fit the mini-batch gradient update procedure and are less prone to be trapped in local minima. Different from the existing methods, the subproblems are related by a regularization term owing to projection of the solutions to the subsets. Our approach to the DML problem is depicted in Fig. \ref{fig:the_figure_1}.

The implications and contributions as the results of our formulation are 1) a general framework that better exploits the proximity constraints to improve the performances of the current and possibly future DML loss functions, 2) idea of re-utilization of the particular class samples in the consecutive mini-batches during the optimization iterations, 3) a solid background through why such a biased batch construction should work, 4) relating the mini-batch updates in general with a regularization term and 5) an efficient class mining method for the batch sampling with $\mathcal{O}(L)$ complexity in contrast with similar methods \cite{harwood2017smart, ge2018deep, Suh_2019_CVPR} of $\mathcal{O}(N^2)$. 

\section{Notations and Definitions}
We consider dataset, $\lbrace (x_i, y_i)_i{\in}\mathcal{X}{\times}\mathcal{Y}\mid i{\in}\mathcal{N}\rbrace$, of two-tuples, where $x_i{\in}\mathcal{X}$ denotes a sample vector from the data space (\eg images), $y_i{\in}\mathcal{Y}{=}\lbrace 1,{\cdots},L\rbrace$ denotes the corresponding label of the sample among $L$ many classes and $\mathcal{N}{=}\lbrace 1,{\cdots},N\rbrace$ denotes the set of indexes to represent samples from the dataset of size $N$. Indicator of the two samples indexed by $i$ and $j$ belonging to the same class is denoted as $y_{i,j}{\in}\lbrace 0,1\rbrace$ where $y_{i,j}{=}1$ if $y_i{=}y_j$. We call $j$ positive/negative sample for $i$ if $y_{i,j}{=}1/0$.

The parametric distance between $x_i$ and $x_j$ is defined as:
\begin{equation}\label{eq:def:d}
d_{i,j}^f(\theta) \triangleq \Vert f(x_i;\theta)-f(x_j;\theta)\Vert_2
\end{equation}
which is the  Euclidean distance equipped with a $D$-dimensional vector valued parametric function, $f:\mathcal{X}{\xrightarrow{f}}\R^D$, with parameters $\theta$. The projection of $\theta$ onto a set $\mathcal{S}$ is:
\begin{equation}\label{eq:def:P}
\mathcal{P}_{\mathcal{S}}(\theta) \triangleq \argmin_{\vartheta{\in}\mathcal{S}} \tfrac{1}{2}\,\Vert \theta - \vartheta \Vert_2^2\,.
\end{equation}
	
For a set defined by an inequality $\mathcal{S} = \lbrace \theta\mid g(\theta) \leqslant 0 \rbrace$, we denote its indicator as $\iota_{\mathcal{S}}(\theta)$ which is:
\begin{equation}\label{eq:def:Ind}
\iota_{\mathcal{S}}(\theta) \triangleq \lim_{\lambda \to 0}\,[\tfrac{1}{\lambda}\,g(\theta) ]_+\,,
\end{equation}
where $[z]_+{=}\max\lbrace 0, z\rbrace$ and $g(\cdot)$ is an arbitrary function defining $\mathcal{S}$. We are to approximate $\iota_{\mathcal{S}}(\theta)$ for small $\lambda$ as:
\begin{equation}\label{eq:def:apprxInd}
\iota_{\mathcal{S}}(\theta)\approx \tfrac{1}{\lambda}\,[g(\theta) ]_+ \triangleq \hat{\iota}_{\mathcal{S}}(\theta).
\end{equation}

\section{Review of the Related Works}
\label{sec:review}
We restrict ourselves to the distance metric learning problem which is posed as learning the parameters, $\theta$, of an embedding function, $f(\cdot;\theta)$, so that the parametric distance, $d_{i,j}^f(\theta)$, between the data samples reflects their semantic dissimilarity. $f$ as a linear mapping is considered in earlier approaches \cite{xing2003distance, weinberger2006distance, perrot2015regressive} that later inspire most of state-of-the-art frameworks \cite{hu2014discriminative, schroff2015facenet, wu2017sampling, song2017deep, sohn2016improved, hadsell2006dimensionality, oh2016deep, Wang_2019_CVPR_MS, Qian_2019_ICCV} in which $f$ is a nonlinear mapping realized by deep neural networks. The general framework for learning the parameters is to minimize an overall loss function of the proximity constraints:
\begin{equation} \label{eq:general_framework}
\mathcal{L}^{(\cdot)}(\theta;\mathcal{T}) = \tfrac{1}{\vert\mathcal{T}\vert}\textstyle\sum\limits_{t\in\mathcal{T}}\ell_t^{(\cdot)}(\theta),
\end{equation}
where $\mathcal{T}$ is the set of index tuples (\eg pair $(i,j)$, triplet  $(i,j^+,j^-)$, etc.) and $\ell_t^{(\cdot)}(\theta)$ is a loss term penalizing the ranking violations among the samples indexed  by $t$. We omit dataset, $\mathcal{D}$, dependency of, $\mathcal{L}$, for clarity.

Learning a proper linear mapping for the parametric distance is initially formulated as a convex optimization problem in \cite{xing2003distance}. To prevent null mappings, a constraint that enforces mapping of the samples from different classes to be at least separated by some margin is added to the formulation. Moving that constraint to the objective via hinge loss \cite{hadsell2006dimensionality} results in the well-known contrastive loss:
\begin{equation}\label{eq:contrastive}
\ell^{cntrv}_{i,j}(\theta) = y_{i,j}\,d_{i,j}^f(\theta)^2 + (1-y_{i,j})\,[\varepsilon-d_{i,j}^f(\theta)]^2_+\, ,
\end{equation}
for sample pairs, $(i,j)$. Contrastive loss ignores intra-class variations of the classes. Triplet loss introduced in \cite{weinberger2006distance} and popularized in deep metric learning frameworks \cite{schroff2015facenet, harwood2017smart} alleviates this problem by constraining the distance to any positive sample to be at least some margin smaller than the distance to any negative sample for each sample:
\begin{equation}\label{eq:triplet}
\ell^{triplet}_{i,j^+,j^-}(\theta) = [d_{i,j^+}^f(\theta)^2 - d_{i,j^-}^f(\theta)^2 + \varepsilon]_+,
\end{equation}
where $j^+$ and $j^-$ are a positive and a negative sample for the sample $i$, respectively. Minimizing triplet loss entails deliberately selection of the triplets to have nonzero loss terms. Thus, either large batch size or mining for exemplars violating the triplet constraint is required \cite{schroff2015facenet}. Such an effort makes the computation of the triplet loss is less attractive than of the contrastive loss. Margin-based loss is introduced in \cite{wu2017sampling} to provide the flexibility in the distribution of the classes in the embedding space without using triplets as the exemplars. It expresses the margin constraint of the triplet loss as separate loss terms of the distances between positive and negative sample pairs by relaxing the constraint of the contrastive loss on the positive pairs:
\begin{equation}\label{eq:margin}
\begin{split}
\ell^{margin}_{i,j}(\theta)&=y_{i,j}\,[d_{i,j}^f(\theta) - \varepsilon + \delta]_+ \\
&+ (1-y_{i,j})\,[\varepsilon + \delta - d_{i,j}^f(\theta)]_+,
\end{split}
\end{equation}
where $\delta$ controls the separation margin and $\varepsilon$ is a trainable parameter for the boundary between positive and negative pairs. The contrastive, triplet and margin-based loss terms are the simplest forms of the pairwise distance ranking based loses. Proceeding approaches utilize smoothed versions of these losses by replacing hinge loss with log-sum-exp \cite{Wang_2019_CVPR, yi2014deep} or soft-max \cite{sohn2016improved, Yu_2019_ICCV} expression. In a different aspect, angular loss \cite{wang2017deep} that constraints the local geometry of the samples in the embedding space is proposed to better exploit the relation among triplets. In those losses, only 2 or 3 data samples contribute to the loss terms. Ranking among more samples are considered in quadruplet \cite{ law2013quadruplet, huang2016local, chen2017beyond}, histogram \cite{Cakir_2019_CVPR, ustinova2016learning} and soft-batch-mining \cite{Wang_2019_CVPR, oh2016deep, sohn2016improved, Yu_2019_ICCV} based losses. Soft-batch-mining exploits log-sum-exp expression as the approximation of max operator to select samples from the batch. Thus, ranking among multiple samples are considered.

The aforementioned approaches consider minimization of an overall loss function to enforce proximity constraints. However, the optimization is performed upon mini-batches due to the vast amount of constraints. Therefore, intra-class variations are prone to be missed, leading to poor generalization \cite{Jacob_2019_ICCV, lin2018deep, Roth_2019_ICCV}. Utilizing informative tuples with non-trivial settings in the gradient updates is considered to address the side effects of the mini-batch gradient updates to improve the representations \cite{schroff2015facenet, yuan2017hard, sohn2016improved, wu2017sampling, ge2018deep, harwood2017smart, Suh_2019_CVPR}. Yet, global mining \cite{ge2018deep, harwood2017smart, Suh_2019_CVPR} for such informative tuples brings additional computational burden and limits scalability to the large datasets, on the other hand, approximate mining \cite{schroff2015facenet, yuan2017hard, wu2017sampling, sohn2016improved, Wang_2019_CVPR, Wang_2019_CVPR_MS} is limited to cover enough settings to capture variations in the data. Generation of synthetic hard negative samples through adversarial \cite{duan2018deep, zhao2018adversarial} and variational \cite{Zheng_2019_CVPR, lin2018deep} models are addressed in recent approaches. 

To enhance the diversity of the semantic information embedded to the vector representations, ensemble techniques are also combined with deep metric learning framework \cite{opitz2017bier, kim2018attention, xuan2018deep, Sanakoyeu_2019_CVPR}. The general idea is simply concatenating the vectors from multiple embedding functions whose parameters are learned by considering different local features of the samples. Hence, better embedding space can be obtained by integrating the vectors that are specialized to different aspects of the samples. Modeling of the class variations through clustering track are also addressed in \cite{Qian_2019_ICCV, rippel2015metric, duan2017deep}. Rather than modeling, eliminating intra-class variances is considered in \cite{lin2018deep, Roth_2019_ICCV, Do_2019_CVPR, Jacob_2019_ICCV}. Though the general framework is to disentangle the intra-class variance upon global representation, differently in \cite{Jacob_2019_ICCV}, variations in the local features are addressed and higher-order moments are considered to regularize the local features so that their aggregation is of less variance. 

Dealing with the relaxed proximity constraints for better representation learning is not only the motivation of us but also the motivation of the recent approaches \cite{movshovitz2017no, Qian_2019_ICCV, Chen_2019_ICCV, Do_2019_CVPR}. The underlying idea is to better solve the relatively simpler problems. Existing approaches either are limited to the solution of the relaxed problem \cite{Do_2019_CVPR, Qian_2019_ICCV, movshovitz2017no} or fail to effectively exploit the relations among the relaxed problems to improve the solutions of the global problem \cite{Sanakoyeu_2019_CVPR,Chen_2019_ICCV}. Contrarily to existing approaches, our motivation is to improve generalization of the learned representations by satisfying proximity constraints more holistically through effectively combining the solutions of the simpler relaxed problems. To this end, we propose a feasible point formulation to inherently relate the relaxed problems and introduce a regularizer to be exploited not only in the proposed but also in a typical DML framework.

\section{Proposed Approach}
We revisit the early DML formulations \cite{xing2003distance,weinberger2006distance} which involve pairwise proximity constraints in the embedding space for the negative pairs. Yet, differently, we omit the positive pair distances to be minimized from the objective function and introduce it as a constraint for the constraint set of  $\theta$. Given a dataset of $N$ samples, $\lbrace (x_i, y_i)_i\rbrace_{i{\in}\mathcal{N}}$, we consider the constraint set $\mathcal{C}=\mathcal{C}^{+}\cap\mathcal{C^{-}}$ for $\theta$:
\begin{subequations}\label{eq:constraint_sets}
\begin{align}
\mathcal{C}^+ &=\lbrace\theta\mid d_{i,j}^f(\theta)\leqslant\varepsilon^+,\,\forall (i,j){\in}\mathcal{N}^2,\,y_{i,j}{=}1\rbrace \,, \\
\mathcal{C}^- &=\lbrace\theta\mid d_{i,j}^f(\theta)\geqslant\varepsilon^-,\,\forall (i,j){\in}\mathcal{N}^2,\,y_{i,j}{=}0\rbrace \,.
\end{align}
\end{subequations}
Once $\epsilon^+{<}\epsilon^-$, the constraints $\mathcal{C}^+$ and $\mathcal{C}^-$ together enforce $f(\cdot;\theta)$ to map the samples from the same class to some neighborhood such that no sample from any other class can be mapped to that neighborhood.
\begin{proposition} \label{proposition}
Any $\theta \in \mathcal{C}$ for some $\varepsilon^+$ and $\varepsilon^-$ is a global minimizer of the loss function, $\mathcal{L}^{(\cdot)}(\theta; \mathcal{T})$, defined in Eqn. \eqref{eq:general_framework} for the loss terms $\ell^{cntrv}_{i,j}(\theta)$, $\ell^{triplet}_{i,j^+,j^-}(\theta)$ and $\ell^{margin}_{i,j}(\theta)$ defined in Eqns. \eqref{eq:contrastive}-\eqref{eq:margin}, respectively. 
\end{proposition}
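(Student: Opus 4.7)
The plan is to exhibit $0$ as a global lower bound of $\mathcal{L}^{(\cdot)}(\theta;\mathcal{T})$ for each of the three loss families and then show that, for a judiciously chosen pair $(\varepsilon^+,\varepsilon^-)$, every summand $\ell_t^{(\cdot)}(\theta)$ in \eqref{eq:general_framework} vanishes whenever $\theta\in\mathcal{C}$. Each of the losses \eqref{eq:contrastive}--\eqref{eq:margin} is built from squared distances and $[\cdot]_+$ pieces, so it is manifestly non-negative; hence the average over $\mathcal{T}$ is non-negative, and any parameter making every term vanish is automatically a global minimizer. The proof therefore reduces to exhibiting, for each loss, admissible $\varepsilon^+,\varepsilon^-$ that force this vanishing uniformly in the tuple index $t$.

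I would then handle the three losses case by case. For the triplet loss, $\theta\in\mathcal{C}$ implies $d_{i,j^+}^f(\theta)^2\leq(\varepsilon^+)^2$ and $d_{i,j^-}^f(\theta)^2\geq(\varepsilon^-)^2$ for every triplet $(i,j^+,j^-)$; choosing any pair with $(\varepsilon^-)^2\geq(\varepsilon^+)^2+\varepsilon$ renders the argument of $[\cdot]_+$ in \eqref{eq:triplet} non-positive and thus $\ell^{triplet}=0$. For the margin loss, both $[\cdot]_+$ pieces of \eqref{eq:margin} vanish simultaneously once $\varepsilon^+\leq\varepsilon-\delta$ and $\varepsilon^-\geq\varepsilon+\delta$, which is clearly feasible. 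For the contrastive loss, the positive-pair term $y_{i,j}\,d_{i,j}^f(\theta)^2$ can vanish only when $d_{i,j}^f(\theta)=0$ for every positive pair, which forces the boundary choice $\varepsilon^+=0$; combined with $\varepsilon^-\geq\varepsilon$, the negative-pair term also vanishes.

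The main obstacle is not deep but rather careful bookkeeping: one has to track which $(\varepsilon^+,\varepsilon^-)$ pairs are admissible for each loss, with the contrastive case being the most delicate because it forces the degenerate choice $\varepsilon^+=0$ (each class collapses to a single embedding vector). A secondary concern is whether $\mathcal{C}$ is non-empty for the chosen parameters; this can be dispatched by remarking that any $f$ mapping every class to a distinct point with pairwise inter-class distance at least $\varepsilon^-$ trivially lies in $\mathcal{C}$. Assembling the three cases and invoking non-negativity of $\mathcal{L}^{(\cdot)}$ yields the proposition.
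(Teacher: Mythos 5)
Your proposal is correct and follows essentially the same route as the paper's proof: establish that $0$ is a global lower bound via non-negativity of each loss term, then exhibit $(\varepsilon^+,\varepsilon^-)$ making every summand vanish on $\mathcal{C}$, with the same choices for the triplet and margin cases. Your handling of the contrastive case is in fact slightly cleaner, since you take $\varepsilon^+=0$ exactly rather than the paper's limiting statement $\varepsilon^+\to 0$, which is what is actually needed for exact attainment of the minimum.
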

\begin{proof}[Proof]
The loss terms defined in Eqns. \eqref{eq:contrastive}-\eqref{eq:margin} satisfy $\ell^{(\cdot)}_t(\theta){\geqslant}0,\,\forall\theta$ which implies $\min\mathcal{L}^{(\cdot)}(\theta; \mathcal{T}){=}0$. Thus, it is enough to show that the proper selection of $\varepsilon^+$ and $\varepsilon^-$ yields $\ell^{(\cdot)}_t(\theta){=}0,\,{\forall}t{\in}\mathcal{T},\,{\forall}\theta{\in}\mathcal{C}$. If $\varepsilon^-{=}\varepsilon$, any $\theta {\in} \mathcal{C}$ results in $\ell^{cntrv}_{i,j}(\theta){\to}0$ as $\varepsilon^+{\to}0,\,{\forall}(i,j){\in}\mathcal{T}$. Similarly, choosing $\varepsilon^+{=}\varepsilon{-}\delta$ and $\varepsilon^-{=}\varepsilon{+}\delta$ makes $\ell^{margin}_{i,j}(\theta){=}0,\,{\forall}(i,j){\in}\mathcal{T},\,{\forall}\theta{\in}\mathcal{C}$. Finally, for any $\varepsilon^+$, choosing $\varepsilon^-{=}((\varepsilon^+)^2{+}\varepsilon)^{\nicefrac{1}{2}}$ results in $\ell^{triplet}_{i,j^+,j^-}(\theta){=}0,\,{\forall}(i,j^+,j^-){\in}\mathcal{T},\,{\forall}\theta{\in}\mathcal{C}$. 
\end{proof}
Proposition \ref{proposition} can be extended to the other pairwise distance based losses and suggests that finding a feasible point of $\mathcal{C}$ is equivalent to solving the minimization of those loss functions. This is actually the restatement of the motivation of the existing approaches \cite{hadsell2006dimensionality,schroff2015facenet, wu2017sampling, wang2017deep, ustinova2016learning, song2017deep, law2017deep, sohn2016improved, movshovitz2017no} in which the loss functions are developed to impose those constraints in the first place. Therefore, these methods can be considered as implicitly finding a feasible point of the constraint set via developing a loss function to be minimized. We address the problem differently by directly focusing on finding a feasible point and develop the loss function accordingly. To formulate our approach, we consider the relaxed set $\mathcal{C}_k=\mathcal{C}^{+}_k\cap\mathcal{C}^{-}_k$:
\begin{equation}\label{eq:supersets}
\begin{split}
\mathcal{C}^+_k &{=}\lbrace\theta\mid d_{k_l,j}^f(\theta)\leqslant\varepsilon^+,\,\forall (l,j),\,l{\in}\mathcal{Y},\,j{\in}\mathcal{N},\,y_{k_l,j}{=}1\rbrace \,, \\
\mathcal{C}^-_k &{=}\lbrace\theta\mid d_{k_l,j}^f(\theta)\geqslant\varepsilon^-,\,\forall (l,j),\,l{\in}\mathcal{Y},\,j{\in}\mathcal{N},\,y_{k_l,j}{=}0\rbrace \,, 
\end{split}
\end{equation}
where $k_l{\in}\lbrace i{\in}\mathcal{N}\mid y_i{=}l \rbrace$ denotes the index of a sample from class $l$. Note that the set $\lbrace k_l\rbrace_{l{=}1}^L$ contains a sample index from each class. For all samples, $\mathcal{C}_k$ enforces proximity constraints only relative to the particular class samples indexed by $\lbrace k_l\rbrace_{l{=}1}^L$. For each $k$, we consider distinct samples from each class such that $\lbrace k_l\rbrace_{l{=}1}^L{\cap}\lbrace k_l^{\prime}\rbrace_{l{=}1}^L{=}\emptyset$ for $k^{\prime}{\neq}k$. Then, $\mathcal{C}$ can be expressed as $\mathcal{C} = \displaystyle\cap_{k{=}1}^K \mathcal{C}_k$, where $K$ is the total number of sets, which can be considered as the maximum number of samples for a class. Then, the feasible point problem can be reformulated as finding a point in the intersection of the sets. If the sets, $\lbrace\mathcal{C}_k\rbrace_k$, were closed and convex, the problem would be solvable by alternating projection methods \cite{bregman1967relaxation, bauschke2000dykstras}. However, it is not uncommon to perform alternating projection methods to non-convex set intersection problems \cite{pang2015nonconvex, solomon2015convolutional}. Hence, we propose to solve the problem approximately by performing alternating projections onto the feasible sets $\lbrace\mathcal{C}_k\rbrace_k$. Hence, the problem becomes:
\begin{equation} \label{eq:soln}
\theta^\ast = \lim_{k\to \infty} \theta^{(k)},\text{ where } \theta^{(k)}= \mathcal{P}_{\mathcal{C}_k}(\theta^{(k{-}1)})\,,
\end{equation}
with $\mathcal{C}_{k{+}K} {\triangleq} \mathcal{C}_{k}$ and $\theta^{(0)}$ is arbitrary. A problem instance corresponding to a projection  becomes:
\begin{equation}
\theta^{(k)} = \mathcal{P}_{\mathcal{C}_k}(\theta^{(k{-}1)}) = \argmin_{\theta{\in}\mathcal{C}_k} \tfrac{1}{2}\,\Vert \theta^{(k{-}1)} {-} \theta \Vert_2^2\,,
\end{equation}
which can be written as an unconstraint problem in terms of the set indicator functions in \eqref{eq:def:Ind} as:
\begin{equation}
\begin{split}
&\theta^{(k)} {=} \argmin_{\theta} \tfrac{1}{2}\,\Vert \theta^{(k{-}1)} {-} \theta \Vert_2^2\\
&+  \textstyle\sum\limits_{(l,j){\mid}y_{k_l,j}{=}1}\iota_{\mathcal{S}^+_{k_l,j}}(\theta)
+\textstyle\sum\limits_{(l,j){\mid}y_{k_l,j}{=}0}\iota_{\mathcal{S}^-_{k_l,j}}(\theta)\,,
\end{split}
\end{equation}
where $\mathcal{S}^+_{k_l,j}{=}\lbrace \theta{\mid} d_{k_l,j}^f(\theta){\leqslant}\varepsilon^+\rbrace$ , $\mathcal{S}^-_{k_l,j}{=} \lbrace \theta{\mid} d_{k_l,j}^f(\theta){\geqslant}\varepsilon^-\rbrace$ and $(l,j){\in}\mathcal{Y}{\times}\mathcal{N}$. If the set indicator functions are to be approximated for small $\lambda$ as $\hat{\iota}_{\mathcal{S}_\ast^\mp}(\theta){=}\tfrac{1}{\lambda}\,[\mp(d^f_\ast(\theta){-}\varepsilon^\mp) ]_+$, as in Eqn. \eqref{eq:def:apprxInd}, the problem becomes after scaling with $\lambda$:
\begin{equation}\label{eq:loss_formulation}
\begin{split}
\theta^{(k)} &{=} \argmin_{\theta} \textstyle\sum\limits_{k_l,j}\big( y_{k_l,j}\,[d_{k_l,j}^f(\theta){-}\varepsilon^+]_+ \\
&{+} (1{-}y_{k_l,j})\, [\varepsilon^- {-} d_{k_l,j}^f(\theta)]_+\big) + \tfrac{\lambda}{2}\,\Vert \theta^{(k{-}1)} {-} \theta \Vert_2^2 
\end{split}
\end{equation}
where $(l,j){\in}\mathcal{Y}{\times}\mathcal{N}$. The resultant minimization problem for a projection step is very similar to a typical DML formulation in Eqn. \eqref{eq:general_framework} with a margin-based loss term \cite{wu2017sampling} defined in Eqn. \eqref{eq:margin}. The two significant differences are the utilization of the particular class samples, $\lbrace k_l\rbrace_{l{\in}\mathcal{Y}}$, for the pairwise distance losses and the regularization term relating the alternating subproblems.

\subsection{Solving for Parameters}
To obtain the solution, $\theta^{\ast}$, defined in Eqn. \eqref{eq:soln}, one should cycle through the sets, $\lbrace\mathcal{C}_k\rbrace_k$, and perform projections until the convergence. The nature of the problem is non-convex. Therefore, exploiting diverse combinations of sets might improve the solution. We propose to perform projections by randomly selecting the class samples for the sets. This approach results in different feasible sets $\lbrace\mathcal{C}_k\rbrace_k$ for each cycle so that the procedure does not stick to the specific sets. Performing a projection involves a minimization problem. In this perspective, either convergence can be monitored to pass the next projection or the projection operator can be approximated by $M$ iterations of training. The latter approach gives the flexibility to control the fitting of the parameters to the subproblems. We therefore propose to use $M$-step approximations of the projection operators as $\theta^{(k)}{\approx} \mathcal{P}^{(M)}_{\mathcal{C}_k}(\theta^{(k{-}1)})$ in our framework.

Proposed learning procedure for $\theta$ necessitates utilization of the particular class samples for the loss computation. To provide scalability, those particular class samples can also be sampled during batch construction. In this manner, another implication of the proposed framework becomes imposing constraint on the batch construction for the minimization. If we disregard the resultant loss formulation in Eq. \eqref{eq:loss_formulation} and consider only batch construction method of our framework, we can formulate minimization of any pairwise distance ranking based loss function within the proposed batch construction method as:
\begin{equation}
\theta^{(k)} = \argmin_{\theta} \mathcal{L}^{(\cdot)}(\theta; \mathcal{T}_k) + \tfrac{\lambda}{2}\,\Vert \theta^{(k{-}1)} - \theta \Vert_2^2\,,
\end{equation}
where $(\cdot)$ can be any proper loss and $\mathcal{T}_k$ is the tuple set of the sample indices defining $\mathcal{C}_k$. The proposed DML framework in its most general form is summarized in Algorithm \ref{algo:profs}. Alternating projections only introduces a constrained batch construction step to the standard optimization procedure.
\algblockdefx{MRepeat}{EndRepeat}{\textbf{repeat} }{}
\algnotext{EndRepeat}

\renewcommand{\algorithmiccomment}[1]{\hfill\eqparbox{COMMENT}{// #1}}

\begin{algorithm}
\caption{PROFS DML}\label{algo:profs}
\begin{algorithmic}
\State randomly initialize parameters $\theta^{(0)}$, $k{=}0$
\Repeat
\State sample $\mathcal{R}{=}\lbrace k_l\mid y_{k_l}{=}l \rbrace_{l{=}1}^L{\sim}\mathcal{N}$ class representatives
\\ \Comment{\ie an example for each class}
\MRepeat {$M$ times} \Comment{$\theta^{(k{+}1)}{\approx}\mathcal{P}^{(M)}_{\mathcal{C}_k}(\theta^{(k)})$}
\State sample $\mathcal{R}^{\prime}{=}\lbrace i_r\rbrace_r{\sim}\mathcal{R}$ a subset of the class reps.
\\ \Comment{possibly via hard class mining}
\State sample $\mathcal{E}{=}\lbrace j_n\rbrace_n{\sim}\mathcal{N}$ a batch of examples
\State construct exemplar batch $\mathcal{B}$ from $\mathcal{R}^{\prime}$ and $\mathcal{E}$
\\ \Comment{\eg pairs $(i_r,j_n)$}
\State $\theta{\gets} \theta {-} \alpha\nabla_{\theta}( \tfrac{\lambda}{2}\,\Vert \theta^{(k)} {-} \theta \Vert_2^2 {+} \mathcal{L}^{(\cdot)}(\theta;\mathcal{B}))$
\EndRepeat
\State $k{\gets} k {+} 1$, $\theta^{(k)}{\gets}\theta$
\Until {convergence}

\end{algorithmic}
\end{algorithm}
%

\subsection{Implications}
\textbf{Robustness.} The parameters are obtained through solving then relating the subproblems rather than solving an overall problem at once. The subproblems are relatively easier problems owing to the relaxed proximity constraints to be considered. Thus, better solutions to the subproblem specific proximity constraints can be obtained. Nevertheless the solutions are expected to be localized, the regularization term entangles the solutions of the alternating subproblems for a better holistic solution. This nature makes our approach more robust to the mini-batch updates. 

\textbf{Class representatives.} The subproblems can be seen as learning representations so that the particular class samples become the class representatives, akin to learning class representative vectors for a softmax classifier. Therefore, choosing the relaxed feasible sets as in Eqn. \eqref{eq:supersets} implicitly integrates classification framework to the DML problem in our formulation. In this point of view, our formulation is aligned with the studies \cite{perrot2015regressive, sun2014deep,zhang2016embedding, Qian_2019_ICCV, movshovitz2017no} supporting the superiority of the softmax incorporation into DML. 

\textbf{Hard negative class mining.} The nature of the proposed DML framework allows efficient HNCM batch construction. As $\theta$ is projected onto the feasible sets, the set-specific class samples can be considered as the class representatives. Thus, using those representatives, approximate global mining can be efficiently performed. To this end, we store the embedding representations of the class representatives and update those representations as we sample the corresponding representatives for the batch construction. In this way, we can perform online HNCM with $\mathcal{O}(L)$ complexity.

\textbf{Relation to linear metric learning with convex optimization.} Convex optimization formulation of linear metric learning in \cite{xing2003distance} involves similar alternating projections onto feasible sets to perform projected gradient ascent. That approach only considers the entire constraint set induced by the positive pairs, whereas we consider both negative and positive pair distances jointly in the relaxed feasible sets.

\textbf{Relation to tuplet and proxy-based losses.} Minimization of tuplet losses \cite{sohn2016improved, Yu_2019_ICCV} can be considered as performing our method by using $M{=}1$ step approximation of the projection operator. Similarly, the minimization of proxy-based losses \cite{movshovitz2017no, perrot2015regressive, Qian_2019_ICCV, Do_2019_CVPR} can be considered as a single projection operation. Therefore, tuplet and proxy-based loss formulations are to be the two extreme cases of our framework.

\textbf{Regularization.} Our formulation suggests a regularization term to entangle the solutions of the different subproblems. The gradient update of a typical DML framework exploiting deliberately constructed mini-batches during the loss minimization can be considered as performing projections with $M{=1}$ step approximation. The proposed regularization term can be introduced to the loss to improve generalization by entangling the updates of the different batches.

\section{Experimental Work}
We examine the effectiveness of the proposed DML framework through evaluation on the three widely-used benchmark datasets for the image retrieval and clustering tasks. We perform ablation study on the effect of $M$-step approximation of the projection operation and the regularization term. Throughout the section, we use PROFS to refer our framework.

\subsection{Benchmark Datasets and Evaluation Metrics}
\label{sec:dataset}
We obtain results by utilizing three public benchmark datasets. The conventional protocol of splitting training and testing sets for a zero-shot  setting \cite{oh2016deep} is followed for all datasets. Hence, no image is in the intersection of the training and the test sets. Stanford Online Products (SOP) \cite{oh2016deep} dataset has 22,634 classes with 120,053 product images. The first 11,318 classes (59,551 images) are split for training and the other 11,316 (60,502 images) classes are used for testing. Cars196 \cite{krause2014submodular} dataset contains 196 classes of cars with 16,185 images. The first 98 classes (8,054 images) are used for training and remaining 98 classes (8,131 images) are reserved for testing. CUB-200-2011 \cite{wah2011caltech} dataset consists of 200 species of birds with 11,788 images. The first 100 species (5,864 images) are split for training, the rest of 100 species (5,924 images) are used for testing.

We follow the standard metric learning experimental protocol defined in \cite{oh2016deep} to evaluate the performance of the deep metric learning approaches for the retrieval and clustering tasks. We utilize normalized mutual information (NMI) and F$_1$ score to measure the quality of the clustering task which is performed by conventional $k$-means clustering algorithm. In order to evaluate clustering performance, normalized mutual information (NMI) and F$_1$ score are utilized. NMI computes the label agreement between predicted and groundtruth clustering assignments neglecting the permutations while F$_1$ measures harmonic mean of the precision and recall. Furthermore, we exploit binary Recall@K metric to evaluate the performance of the retrieval task. Recall@K for a test query is 1 if at least one sample from the same class of the query is in the  $K$ nearest neighborhood of the query. The average of the Recall@K for the test queries gives the Recall@K performance on the dataset. We refer \cite{oh2016deep} for the detailed information related to these evaluation metrics.

\subsection{Training Setup}
\label{sec:setup}
We use Tensorflow \cite{abadi2016tensorflow} and PyTorch \cite{paszke2017automatic} deep learning libraries throughout the experiments. Tensorflow is used for trainings on GoogLeNet V1 (Inception V1) \cite{szegedy2015going} and PyTorch is used for trainings on GoogLeNet V2 (with batch normalization) \cite{normalization2015accelerating} and ResNet-50 \cite{he2016identity}. After the images are normalized and scaled to $256{\times}256$, we perform $224{\times}224$ random crop and data augmentation by horizontal mirroring as  pre-processing. For the embedding function, $f(\cdot;\theta)$, we utilize architectures until the output of the global average pooling layer with the parameters pretrained on ImageNet ILSVRC dataset \cite{russakovsky2015imagenet}. After the pooling layer, we add a linear transformation layer (fully connected layer) to obtain the representation vectors of size 512. We fix the embedding size of the samples at 512 throughout experiments, since it is shown in \cite{oh2016deep} that the embedding size does not have a key role on comparing performances of the deep metric learning loss functions. The parameters of the linear transformation layer are randomly initialized and are learned by using 10 times larger learning rate than the pretrained parameters for the sake of fast convergence. For the hyper-parameters, our framework introduces 2 additional parameters: $\lambda$ for regularization and $M$ to approximate projection operation. We set the regularization term as the result of the projection based formulation, $\lambda$, to a small reasonable value, $10^{-3}$. The number of projections steps, $M$, is determined according to the findings of the ablation study which is presented in subsection \ref{sec:ablation}. For the other hyper-parameters coming from adaptation of the baseline framework (\eg margins, number of positive samples etc.), we follow the settings in the corresponding baseline work. For the optimization procedure, we select the base learning rate as $10^{-4}$ for SOP dataset whereas we utilize $10^{-5}$ learning rate to train CUB-200-2011 and Cars196, since they tend to meet over-fitting problem due to the limited dataset size. We exploit Adam \cite{kingma2014adam} optimizer for mini-batch gradient descent with a mini-batch size is 128 and default moment parameters, $\beta_1{=}.9$ and $\beta_2{=}.99$. Finally, since the convergence rate of each method is different, we train all the approaches for 100 epochs and post the performance at their best epoch as in \cite{wu2017sampling} instead of following the conventional procedure \cite{oh2016deep} reporting performance of DML approaches after a certain number of training iterations.

\subsection{Baseline Methods and PROFS Framework Adaptation}
\label{sec:baseline}
We apply proposed PROFS framework with and without HNCM on the contrastive \cite{hadsell2006dimensionality}, triplet \cite{schroff2015facenet}, lifted structured \cite{oh2016deep}, $N$-pair \cite{sohn2016improved}, angular \cite{wang2017deep}, margin-based (Margin) \cite{wu2017sampling}, multi-similarity (MS) \cite{Wang_2019_CVPR_MS} and SoftTriple \cite{Qian_2019_ICCV} loss functions in order to directly compare with the state-of-the art methods. The comparison with the contrastive, triplet and Margin losses is important to examine the effectiveness of our original formulation, while the comparison with the other losses is to show that the formulation can be extended to the other loss functions by exploiting the proposed batch construction and regularization. Furthermore, we compare proxy-based \cite{movshovitz2017no, Qian_2019_ICCV} loss functions with PROFS owing to its relation to our formulation.

To evaluate the approaches in the same basis, we retrain all the aforementioned loss functions excluding SoftTriple by exploiting the same GoogLeNet V1 architecture with the default hyper-parameters used in the original works except for the mini-batch and the embedding sizes as explained in the subsection \ref{sec:setup} for a fair comparison. Moreover, we integrate our framework to Margin and SoftTriple losses in their own architectures (ResNet-50 and GoogLeNet V2, respectively) in order to compare our framework with state-of-the-art.

To adapt the lifted structured and MS loss to PROFS framework, the loss is slightly modified by ignoring the pairwise terms between the non-representative samples. For the adaptation of the sampling strategies, we trained Margin loss with the distance weighted sampling in its own architecture. However, we were unable to acquire good results in GoogLeNet V1, since the distance weighted sampling is very sensitive to its parameters and we could not determine well-performing hyper-parameters. On the other hand, due to its similarity with the contrastive loss, we exploit the same hard mining strategy inspired from \cite{yuan2017hard} as in the contrastive loss for the mini-batch sampling method of the Margin loss. It should be noted that we sample one negative pair for each positive pair as in \cite{wu2017sampling} for the contrastive, triplet and Margin loss functions. Such an approach provides balance to the number of positive and negatives pairs. In the conventional hard mining, the number of hard negative pairs should match the number of positive pairs for the contrastive, triplet and Margin loss functions. Thus, in PROFS framework, each class representative should have the same number of its corresponding positive pairs and hard negative pairs to obtain an exemplar set consistent with hard mining without violating the batch construction constraint of PROFS. No adaptation is performed for the $N$-pair and angular loss, since their formulation is consistent with PROFS. Finally, for SoftTriple loss, the adaptation is not straightforward, since this framework is inherently relaxed formulation of DML problem. On the other hand, in that framework, the loss terms are determined by assigning samples to the trainable cluster centers. At different iterations, the assignment may differ. To this end, we consider each iteration as a subproblem and integrate our framework by exploiting the regularization term to entangle the updates of the iterations. For HNCM, we utilize cluster centers as representatives in SoftTriple.

\begin{table*}[]
	\centering
	\caption{Comparison with the existing methods for the clustering and the retrieval tasks on SOP \cite{oh2016deep}, CARS196 \cite{krause2014submodular} and CUB-200-2011 \cite{wah2011caltech} datasets. Red: the overall best. Blue: the overall second best. Bold: the loss term specific best. }
	\label{tab:all}
	\resizebox{\textwidth}{!}{%
		\begin{tabular}{lcccccc|cccccc|cccccc}
			\toprule 
			&\multicolumn{6}{c|}{Stanford Online Products}& \multicolumn{6}{c|}{CARS196} & \multicolumn{6}{c}{CUB-200-2011} \\ \cmidrule{2-7} \cmidrule{8-13} \cmidrule{14-19} 
			Method & NMI & F$_1$ & R@1 & R@10 & R@100 & R@1000 
			 & \multicolumn{1}{l}{NMI} & \multicolumn{1}{l}{F$_1$} & \multicolumn{1}{l}{R@1} & \multicolumn{1}{l}{R@2} & \multicolumn{1}{l}{R@4} &\multicolumn{1}{l|}{R@8} & \multicolumn{1}{l}{NMI} & \multicolumn{1}{l}{F$_1$} & \multicolumn{1}{l}{R@1} & \multicolumn{1}{l}{R@2} & \multicolumn{1}{l}{R@4} & \multicolumn{1}{l}{R@8} \\ \midrule
			Contrastive-Hard & {\color[HTML]{000000} 89.7} & {\color[HTML]{000000} 34.5} & {\color[HTML]{000000} 67.9} & {\color[HTML]{000000} 83.8} & {\color[HTML]{000000} 93.2} & {\color[HTML]{000000} 97.9} & 66.0 & 36.6 & 75.8 & 84.5 & 90.1 & 94.1 &63.4 & 31.8 & 56.7 & 68.4 & 78.8 & 86.3 \\
			C-PROFS & {\color[HTML]{000000} 90.3} & {\color[HTML]{000000} 36.4} & {\color[HTML]{000000} 70.1} & {\color[HTML]{000000} 85.4} & {\color[HTML]{000000} 94.1} & {\color[HTML]{000000} 98.2} & \textbf{67.8} & \color[HTML]{000000}{\textbf{39.0}} & 76.0 & 84.8 & 90.1 & \textbf{94.6} & 64.1 & 32.0 & 57.7 & \textbf{69.0} & 78.9 & 86.9 \\
			C-PROFS-HNCM & {\color[HTML]{000000} \textbf{91.0}} & {\color[HTML]{000000} \textbf{41.2}} & {\color[HTML]{000000} \textbf{74.5}} & {\color[HTML]{000000} \textbf{87.9}} & {\color[HTML]{3166FF} \textbf{94.9}} & {\color[HTML]{FE0000} \textbf{98.3}} & 66.9 & 38.1 & \textbf{77.0} & \textbf{85.3} & \textbf{90.9} & 94.4 & \textbf{64.6} & \textbf{33.1} & \textbf{57.9} & \textbf{69.0} & \textbf{79.4} & \textbf{87.0} \\ \midrule
			Triplet-Semi & {\color[HTML]{000000} 87.1} & {\color[HTML]{000000} 23.4} & {\color[HTML]{000000} 57.0} & {\color[HTML]{000000} 75.0} & {\color[HTML]{000000} 88.2} & {\color[HTML]{000000} 96.4} & 62.4 & 30.0 & 71.2 & 80.7 & 87.6 & 92.5 & 60.9 & 27.8 & 55.5 & 67.8 & 78.2 & 86.5 \\
			Triplet-Hard & {\color[HTML]{000000} 88.1} & {\color[HTML]{000000} 30.6} & {\color[HTML]{000000} 65.4} & {\color[HTML]{000000} 81.4} & {\color[HTML]{000000} 91.7} & {\color[HTML]{000000} 97.4} & 62.5 & 30.6 & 71.4 & 81.1 & 87.5 & 92.7 & 63.1 & 30.7 & 56.8 & 68.7 & 78.6 & 86.5 \\
			T-PROFS & {\color[HTML]{000000} 90.7} & {\color[HTML]{000000} 37.7} & {\color[HTML]{000000} 72.4} & {\color[HTML]{000000} 86.3} & {\color[HTML]{000000} 94.1} & {\color[HTML]{000000} \textbf{98.1}} & 64.7 & 33.4 & \textbf{74.5} & 82.8 & 89.1 & 93.5 & 63.5 & 31.3 & 57.9 & 68.9 & 78.7 & 86.7 \\
			T-PROFS-HNCM & {\color[HTML]{3166FF} \textbf{91.3}} & {\color[HTML]{000000} \textbf{42.3}} & {\color[HTML]{3166FF} \textbf{74.9}} & {\color[HTML]{000000} \textbf{87.5}} & {\color[HTML]{000000} \textbf{94.4}} & {\color[HTML]{000000} \textbf{98.1}} & \textbf{64.9} & \textbf{33.7} & \textbf{74.5} & \textbf{83.5} & \textbf{89.4} & \textbf{93.7} & \textbf{64.5} & \textbf{32.3} & {\color[HTML]{3166FF} \textbf{58.1}} & \textbf{69.0} & \textbf{78.9} & \textbf{86.8} \\ \midrule
			Lifted & {\color[HTML]{000000} 88.9} & {\color[HTML]{000000} 31.4} & {\color[HTML]{000000} 66.7} & {\color[HTML]{000000} 83.2} & {\color[HTML]{000000} 91.7} & {\color[HTML]{000000} 97.4} & 60.1 & 27.7 & 67.5 & 77.3 & 84.9 & 90.7 & 60.6 & 26.9 & 53.5 & 65.3 & 75.4 & 84.8 \\
			L-PROFS & {\color[HTML]{000000} 89.5} & {\color[HTML]{000000} 33.8} & {\color[HTML]{000000} 68.1} & {\color[HTML]{000000} 84.0} & {\color[HTML]{000000} 92.2} & {\color[HTML]{000000} 97.6} & 61.2 & 27.9 & 68.4 & 78.1 & 85.6 & 91.1 & 61.4 & 28.1 & 54.5 & \textbf{66.1} & 76.2 & 85.1 \\
			L-PROFS-HNCM & {\color[HTML]{000000} \textbf{89.9}} & {\color[HTML]{000000} \textbf{35.1}} & {\color[HTML]{000000} \textbf{69.3}} & {\color[HTML]{000000} \textbf{85.1}} & {\color[HTML]{000000} \textbf{93.6}} & {\color[HTML]{000000} \textbf{98.1}} & \textbf{61.5} & \textbf{30.0} & \textbf{70.7} & \textbf{79.6} & \textbf{86.2} & \textbf{91.5} & \textbf{62.0} & \textbf{29.5} & \textbf{54.6} & \textbf{66.1} & \textbf{76.7} & \textbf{85.5} \\ \midrule
			$N$-pair & {\color[HTML]{000000} 89.9} & {\color[HTML]{000000} 35.7} & {\color[HTML]{000000} 70.8} & {\color[HTML]{000000} 86.0} & {\color[HTML]{000000} 94.0} & {\color[HTML]{000000} 98.1} & 67.4 & 38.2 & 76.7 & 84.8 & 91.0 & 95.0 & 64.6 & 33.0 & 56.0 & 68.9 & 79.3 & 87.4 \\
			$N$-PROFS & {\color[HTML]{000000} 90.3} & {\color[HTML]{000000} 37.3} & {\color[HTML]{000000} 71.7} & {\color[HTML]{000000} \textbf{86.6}} & {\color[HTML]{000000} 94.0} & {\color[HTML]{3166FF} \textbf{98.2}} & {\color[HTML]{000000} 68.1} & 38.3 & {\color[HTML]{3166FF} \textbf{77.8}} & {\color[HTML]{000000} 85.9} & {\color[HTML]{000000} 91.6} & {\color[HTML]{3166FF} \textbf{95.2}} & {\color[HTML]{000000} 64.9} & {\color[HTML]{000000} 33.6} & 56.5 & \textbf{69.0} & 79.3 & {\color[HTML]{FE0000} \textbf{87.7}} \\
			$N$-PROFS-HNCM & {\color[HTML]{000000} \textbf{90.5}} & {\color[HTML]{000000} \textbf{37.5}} & {\color[HTML]{000000} \textbf{71.9}} & {\color[HTML]{000000} \textbf{86.6}} & {\color[HTML]{000000} \textbf{94.1}} & {\color[HTML]{3166FF} \textbf{98.2}} & {\color[HTML]{FE0000} \textbf{68.6}} & {\color[HTML]{FE0000} \textbf{39.9}} & 77.6 & {\color[HTML]{FE0000} \textbf{86.2}} & {\color[HTML]{3166FF} \textbf{91.7}} & {\color[HTML]{3166FF} \textbf{95.2}} & {\color[HTML]{FE0000} \textbf{65.2}} & {\color[HTML]{000000} \textbf{33.7}} & \textbf{56.7} & 68.8 & {\color[HTML]{FE0000} \textbf{79.7}} & 87.5 \\ \midrule
			Angular & {\color[HTML]{000000} 90.0} & {\color[HTML]{000000} 36.1} & {\color[HTML]{000000} 72.5} & {\color[HTML]{000000} 86.6} & {\color[HTML]{000000} 93.6} & {\color[HTML]{000000} 97.6} & 66.0 & 35.9 & 77.4 & 85.3 & 91.0 & 94.7 & 61.9 & 29.4 & 54.5 & 66.4 & 76.8 & 84.9 \\
			A-PROFS & {\color[HTML]{000000} 90.1} & {\color[HTML]{000000} 37.2} & {\color[HTML]{000000} 73.0} & {\color[HTML]{000000} 86.8} & {\color[HTML]{000000} 93.7} & {\color[HTML]{000000} \textbf{97.7}} & \textbf{66.3} & 36.8 & 77.5 & \textbf{85.6} & 91.2 & 94.7 & 63.0 & 31.7 & 54.6 & 66.7 & 76.9 & 85.8 \\
			A-PROFS-HNCM & {\color[HTML]{000000} \textbf{90.4}} & {\color[HTML]{000000} \textbf{38.4}} & {\color[HTML]{000000} \textbf{73.7}} & {\color[HTML]{000000} \textbf{86.9}} & {\color[HTML]{000000} \textbf{93.8}} & {\color[HTML]{000000} \textbf{97.7}} & \textbf{66.3} & \textbf{37.9} & {\color[HTML]{FE0000} \textbf{77.9}} & \textbf{85.6} & \textbf{91.4} & \textbf{94.8} & \textbf{64.6} & \textbf{33.4} & \textbf{55.8} & \textbf{68.1} & \textbf{78.8} & \textbf{87.3} \\ \midrule
			Margin-Hard & {\color[HTML]{000000} 90.6} & {\color[HTML]{000000} 38.1} & {\color[HTML]{000000} 73.9} & {\color[HTML]{000000} 87.7} & {\color[HTML]{000000} 94.8} & {\color[HTML]{3166FF} 98.2} & 64.2 & 34.6 & 75.1 & 83.7 & 89.5 & 93.8 & 64.0 & 30.9 & 55.3 & 67.2 & 77.9 & 87.5 \\
			M-PROFS & {\color[HTML]{3166FF} 91.3} & {\color[HTML]{000000} 42.7} & {\color[HTML]{000000} 74.5} & {\color[HTML]{3166FF} 88.0} & {\color[HTML]{FE0000} \textbf{95.0}} & {\color[HTML]{3166FF} 98.2} & 64.6 & 35.1 & 76.0 & 84.3 & 89.5 & 93.8 & 64.3 & \textbf{32.7} & 57.7 & {\color[HTML]{3166FF} 69.5} & {\color[HTML]{000000} 79.5} & 87.5 \\
			M-PROFS-HNCM & {\color[HTML]{FE0000} \textbf{91.4}} & {\color[HTML]{FE0000} \textbf{43.2}} & {\color[HTML]{FE0000} \textbf{76.3}} & {\color[HTML]{FE0000} \textbf{88.8}} & {\color[HTML]{FE0000} \textbf{95.0}} & {\color[HTML]{FE0000} \textbf{98.3}} & \textbf{66.8} & \textbf{37.3} & \textbf{77.0} & \textbf{85.1} & \textbf{90.8} & \textbf{94.6} & \textbf{64.8} & 32.4 & {\color[HTML]{FE0000} \textbf{58.5}} & {\color[HTML]{FE0000} \textbf{69.6}} & {\color[HTML]{FE0000} \textbf{79.7}} & {\color[HTML]{3166FF} \textbf{87.6}} \\ 
			\midrule
			MS & {\color[HTML]{000000} 90.5} & {\color[HTML]{000000} 38.2} & {\color[HTML]{000000} 71.3} & {\color[HTML]{000000} 86.4} & {\color[HTML]{000000} 94.2} & {\color[HTML]{000000} 98.1} 
			& 65.3 & 35.5 & 76.2 & 84.2 & 89.9 & 94.0 & 
			64.7 & 33.8 & 56.4 & 69.1 & 79.3 & 87.5 \\
			MS-PROFS & {\color[HTML]{000000} 90.8} & {\color[HTML]{000000} 39.5} & {\color[HTML]{000000} 72.6} & {\color[HTML]{000000} 87.2} & {\color[HTML]{000000} 94.3} & {\color[HTML]{3166FF} \textbf{98.2}} 
			& 66.6 & 37.1 & 76.6 & 84.9 & 90.6 & 94.4 &
			\color[HTML]{3166FF} 65.0 & \color[HTML]{3166FF}34.3 & 57.5 & {\color[HTML]{000000} \textbf{69.4}} & {\color[HTML]{3166FF} \textbf{79.6}} & 87.5 \\
			MS-PROFS-HNCM & {\color[HTML]{000000} \textbf{91.0}} & {\color[HTML]{3166FF} \textbf{42.9}} & {\color[HTML]{000000} \textbf{74.6}} & {\color[HTML]{000000} \textbf{87.7}} & {\color[HTML]{000000} \textbf{94.6}} & {\color[HTML]{3166FF} \textbf{98.2}} &
			 \color[HTML]{3166FF}\textbf{68.4} & \color[HTML]{3166FF}\textbf{39.2} & \color[HTML]{3166FF}\textbf{77.8} & \color[HTML]{3166FF}\textbf{86.0} & \color[HTML]{FE0000}\textbf{91.8} & \color[HTML]{FE0000}\textbf{95.3} &
			  \color[HTML]{FE0000}\textbf{65.2} & \color[HTML]{FE0000}\textbf{34.4} & {\color[HTML]{3166FF} \textbf{58.1}} & {\color[HTML]{000000} 69.3} & {\color[HTML]{3166FF} \textbf{79.6}} & {\color[HTML]{FE0000} \textbf{87.7}} \\ \bottomrule
		\end{tabular}%
	}
\end{table*}

\begin{table*}[]
	\centering
	\caption{Comparison with state-of-the-art methods on SOP \cite{oh2016deep}, CARS196 \cite{krause2014submodular} and CUB-200-2011 \cite{wah2011caltech} datasets. Red: the overall best. Blue: the overall second best. Bold: the loss term specific best. }
	\label{tab:all_result_2}
	\resizebox{\textwidth}{!}{
	\begin{tabular}{lcccc|cccc|cccc} 
		\toprule
		& \multicolumn{4}{c|}{Stanford Online Products}                 & \multicolumn{4}{c|}{CARS196}                                                                           & \multicolumn{4}{c}{CUB-200-2011}                                                                       \\ 
		\cmidrule{2-13}
		Method                & R@1           & R@10          & R@100         & R@1000        & \multicolumn{1}{l}{R@1} & \multicolumn{1}{l}{R@2} & \multicolumn{1}{l}{R@4} & \multicolumn{1}{l|}{R@8} & \multicolumn{1}{l}{R@1} & \multicolumn{1}{l}{R@2} & \multicolumn{1}{l}{R@4} & \multicolumn{1}{l}{R@8}  \\ 
		\midrule
		Proxy-NCA  \cite{movshovitz2017no}           & 73.7          & -             & -             & -             & 73.2                    & 82.4                    & 86.4                    & 88.7                     & 49.2                    & 61.9                    & 67.9                    & 72.4                     \\
		Clustering \cite{song2017deep}           & 67.0          & 83.7          & 93.2          & -             & 58.1                    & 70.6                    & 80.3                    & 87.8                     & 48.2                    & 61.4                    & 71.8                    & 59.2                     \\
		HDC  \cite{yuan2017hard}                 & 69.5          & 84.4          & 92.8          & 97.7          & 73.7                    & 83.2                    & 89.5                    & 93.8                     & 53.6                    & 65.7                    & 77.0                    & 85.6                     \\
		HTL  \cite{ge2018deep}                 & 74.8          & 88.3          & 94.8          & 98.4          & 81.4                    & 88.0                    & 92.7                    & 95.7                     & 57.1                    & 68.8                    & 78.7                    & 86.5                     \\
		MS   \cite{Wang_2019_CVPR_MS}                 & 78.2          & 90.5          & 96.0          & 98.7          & 84.1                    & 90.4                    & 94.0                    & 96.5                     & 65.7                    & \color[HTML]{FE0000}77.0                    & \color[HTML]{FE0000}86.3                    & \color[HTML]{FE0000}91.2                     \\
		TML    \cite{Yu_2019_ICCV}               & 78.0          & 91.2          & \color[HTML]{3166FF}96.7          & 99.0          & \color[HTML]{FE0000}86.3                    & \color[HTML]{3166FF}92.3                    & \color[HTML]{FE0000}95.4                    & 97.3                     & 62.5                    & 73.9                    & 83.0                    & 89.4                     \\ 
		\hline
		Margin   \cite{wu2017sampling}            & 72.7          & 86.2          & 93.8          & 98.0          & 79.6                    & 86.5                    & 91.9                    & 95.1                     & 63.6                    & 74.4                    & 83.1                    & 90.0                     \\
		M-PROFS               & 76.5          & 89.0          & 95.2          & \textbf{98.5} & 81.1                    & \textbf{88.1}           & 92.7                    & \textbf{95.8}            & \textbf{64.9}           & \textbf{75.8}           & \textbf{84.2}           & \textbf{90.4}            \\
		M-PROFS-HNCM          & \textbf{76.9} & \textbf{89.5} & \textbf{95.3} & \textbf{98.5} & \textbf{81.3}           & 88.0                    & \textbf{93.0}           & \textbf{95.8}            & 64.1                    & 75.0                    & \textbf{84.2}           & 90.3                     \\ 
		\midrule
		SoftTriple \cite{Qian_2019_ICCV}            & 78.3          & 90.3          & 95.9          & -             & 84.5                    & 90.7                    & 94.5                    & 96.9                     & 65.4                    & 76.4                    & 84.5                    & 90.4                     \\
		SoftTriple-PROFS      & \color[HTML]{3166FF}78.6          & \color[HTML]{3166FF}91.4          & 96.0          & \color[HTML]{3166FF}99.1          & \color[HTML]{3166FF}86.1                    & 91.9                    & 94.7                    & \color[HTML]{3166FF}97.4                     &\color[HTML]{FE0000} \textbf{66.0}           & \color[HTML]{3166FF}\textbf{76.8}           & \color[HTML]{3166FF}\textbf{85.0}           & \color[HTML]{3166FF}\textbf{90.7}            \\
		SoftTriple-PROFS-HNCM & \color[HTML]{FE0000}\textbf{78.7} &\color[HTML]{FE0000} \textbf{91.7} & \color[HTML]{FE0000}\textbf{96.8} &\color[HTML]{FE0000} \textbf{99.2} & \color[HTML]{FE0000}\textbf{86.3}           & \color[HTML]{FE0000}\textbf{92.5}           & \color[HTML]{3166FF}\textbf{95.0}           & \color[HTML]{FE0000}\textbf{97.5}            & \color[HTML]{3166FF}65.7                    & 76.2                    & 84.5                    & 90.6                     \\
		\bottomrule
	\end{tabular}}
\end{table*}

\subsection{Ablation Study}
\label{sec:ablation}
\begin{figure}[!h]

\begin{minipage}[b]{1.0\linewidth}
  \centering
  \centerline{\includegraphics[width=1.0\linewidth,keepaspectratio]{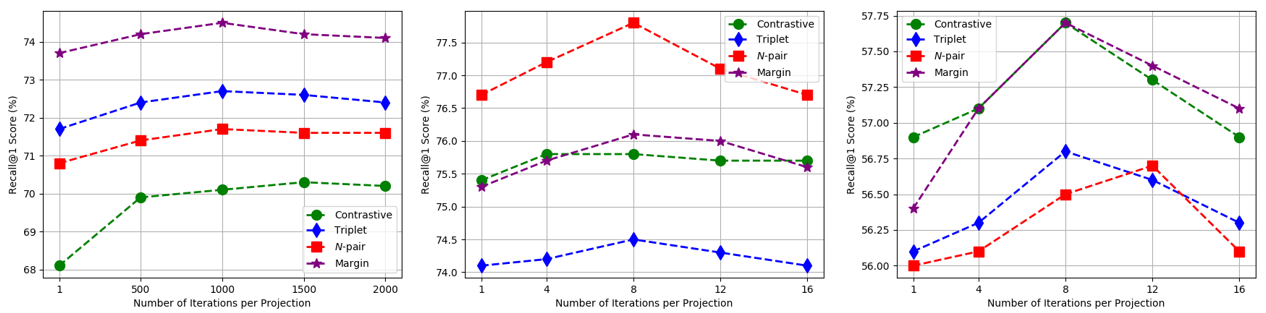}}

\end{minipage}

\caption{The retrieval performance of the PROFS on the Stanford Online Products \cite{oh2016deep} (left), the CARS196 \cite{krause2014submodular} (middle) and the CUB-200-2011 \cite{wah2011caltech} (right) for different number of iterations used to compute projection.}
\label{fig:ablation_study}
\end{figure}
We present to ablation studies on the approximation of the projection operator and the effectiveness of the regularization term. For the computational complexity, the proposed PROFS without HNCM framework does not bring any additional computational cost. On the other hand, HNCM only introduces scan of class representatives during batch construction and has $0.008\%$ (CUB\&CARS) and $0.9\%$ (SOP) increase in overall computation time. Moreover, we have not observed any significant increase or decrease in the convergence rate owing to re-utilization of the class samples.
\subsubsection{Number of Projection Iterations}	
We perform ablation study to determine approximately how many training iterations, $M$, required to assure an acceptable approximation of the projection operation. Due to the non-convex nature of the problem, enforcing convergence for each projection might lead to ill-configured parameters that the proceeding projections cannot recover.  
	
To examine effect of $M$, we apply PROFS framework without hard negative class mining to the contrastive, triplet, $N$-pair and margin loss functions on SOP, Cars196 and CUB-200-2011. We use 2 positive images, $I{=}2$, per class in mini-batches for each loss function to have consistency among the loss terms. The retrieval performance curves for varying $M$ values are plotted Fig. \ref{fig:ablation_study}. The retrieval performance of each loss function improves on all three datasets as $M$ increases up to certain values. Exceeding that certain number of iterations leads to possible over-fitting to the subproblems and the performance drops accordingly. Once the performance curves of the different datasets are compared, a heuristic relation between $M$ and the number of classes in a dataset can be deduced. Hence, instead of fine-tuning $M$ parameter for each problem, we derive $M$ as $M{=}\lceil \nicefrac{\rho }{p(y_i) } \rceil$ where  $\lceil{\cdot}\rceil$ is the ceiling function, $p(y_i)$ denotes the probability of observing a class $i$ in a mini-batch of size $B$, and $\rho$ is how many times a class representation is to be used during the minimization. We write $p(y_i){=}\nicefrac{B}{I\,L}$ for a batch containing $I$ samples from each class it contains among $L$ many classes. Considering the plots in Fig. \ref{fig:ablation_study}, we set $\rho{=}6$ throughout the experiments.

\subsubsection{Effectiveness of the Regularization Term}
We analyze the effect of $\tfrac{\lambda}{2}\Vert \theta^{(curr.)}{-}\theta \Vert_2^2$ term by sweeping the $\lambda$ parameter for training settings with Margin loss in GoogLeNet V1. Theoretically $\lambda$ should be as small as possible (Eqn. \eqref{eq:loss_formulation}), thus large values result in performance drop owing to over-regularization. On the other hand, $\lambda{=}0$ implies no regularization for the projection and leads to over-fitting to the subproblems. According to results in Fig. \ref{fig:ablation_lambda}, the performances are similar for the values within $\left[10^{-2},10^{-4}\right]$. Though we pick $10^{-3}$ throughout the experiments, one can choose to adaptively reduce $\lambda$ as the parameters converge. In this way, the relative significance of the loss function for the constraints can be preserved as the parameters converge. 

\begin{figure}[!h]

  \centering
  \centerline{\includegraphics[width=0.83\linewidth]{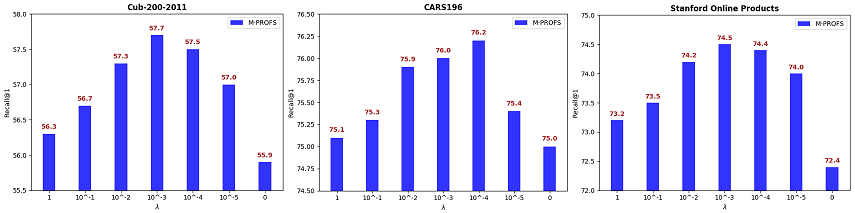}}

%
\caption{Effect of the $\Vert \theta^{(curr.)}{-}\theta \Vert_2^2$ term}
\label{fig:ablation_lambda}
\end{figure}

\subsection{Quantitative Results}
The quantitative results of the proposed PROFS framework trained in GoogLeNet V1 with and without HNCM for the clustering and retrieval tasks on SOP, CARS196 and  CUB-200-2011 datasets are provided in Table \ref{tab:all} together with the baseline methods indicated in \ref{sec:baseline} for comparison. It can be observed that PROFS framework consistently outperforms the associated baseline methods. Compared with the original loss functions, the proposed PROFS framework boosts their performance on each dataset for the clustering and retrieval tasks by up to 3.2\%, 11.7\% and 9.5\% points on NMI, F$_1$ and R@1 metrics, respectively. Additionally, the proposed PROFS framework with the contrastive and triplet loss functions produce competitive results in comparison with superior Margin loss. It supports that considering relaxed feasible sets iteratively is beneficial over solving the entire problem at once even with the basic loss functions. This result is important to support the motivation of our formulation. Furthermore, performance improvements on the loss functions which does not directly fit in our formulation in Eq. \eqref{eq:loss_formulation} show that the batch construction and regularization implication of the proposed formulation can be generalized to the pairwise distance based loss functions.

The performances of the proposed framework integrated to Margin and SoftTriple architectures are provided in Table \ref{tab:all_result_2}. In most settings, the proposed framework improves state-of-the-art. Especially, improving SoftTriple is of great importance, since that performance increase comes from the regularization purely. This result supports the effectiveness of the regularization term to be used to improve generalization. Lastly, utilizing HNCM is more efficient on SOP dataset, since it has almost 100 times larger number of classes than  CARS196 and  CUB-200-2011 datasets.

\section{Conclusion}
We have presented a novel DML formulation based on alternating projections onto the feasible sets which impose relaxed proximity constraints. The resultant framework introduces a simple, yet effective, batch construction scheme and a regularizer to improve the generalization. Notably, the proposed framework is applicable with the pairwise distance based state-of-the-art DML loss functions without introducing any additional computational cost. Extensive evaluations on the benchmark datasets show that the performances of the several state-of-the-art loss functions are improved by the proposed framework. 

	
{\small
\bibliographystyle{ieee_fullname}
\bibliography{library}
}

\end{document}